\documentclass[sn-mathphys-num]{sn-jnl}

\usepackage{graphicx} 
\usepackage{amsmath,amssymb} 
\usepackage{tikz}
\usepackage{algorithm}
\usepackage{algpseudocode}
\usepackage{booktabs}
\usepackage{rotating} 

\theoremstyle{thmstyleone}
\newtheorem{theorem}{Theorem}

\begin{document}

\title{RIFT: Relative Isolation From Trees For Anomaly Detection}

\author*[1]{M\'ark D\'aniel Szalai} 
\author*[1]{G\'abor Horv\'ath} 

\affil*[1]{%
  \orgdiv{Department of Networked Systems and Services, Faculty of Electrical Engineering and Informatics},
  \orgname{Budapest University of Technology and Economics}, 
  \orgaddress{\street{M\H{u}egyetem rkp. 3}, \city{Budapest}, \postcode{H-1111}, \country{Hungary}}
}

\email{szalaimd@edu.bme.hu} 
\email{horvath.gabor@vik.bme.hu} 

\abstract{Isolation Forest (IF) is a widely used baseline for unsupervised anomaly detection. Recent studies provide a closed-form expression for the infinite-forest limit for one-dimensional data. Inspired by the geometric interpretation of this formula, we introduce RIFT (Relative Isolation From Trees), a deterministic anomaly detection method that generates the minimum spanning tree and scores each point by the sum of the apparent sizes of tree edges as viewed from that point. For one-dimensional data, the RIFT score recovers the closed-form IF limit exactly. In higher dimensions, it provides a parameter-free generalization that is deterministic, robust to varying density and clustered anomalies and avoids the axis-parallel artifacts of IF. We further propose an ensemble variant for large datasets. Experiments on synthetic data and the ADBench benchmark demonstrate that the accuracy is comparable to IF, while the ensemble variant exhibits significantly lower variance across random seeds.}

\keywords{anomaly detection, outlier detection, ensemble, unsupervised learning, machine learning} 

\maketitle

\section{Introduction}

Today, vast amounts of data are generated across various domains, including IT systems, industry, finance, and healthcare \cite{chandola2009anomaly, aggarwal2017outlier}. Machine learning tools enable the automated detection of anomalies, or outliers, within these datasets. An anomaly is generally defined as a data point that deviates significantly from the majority, suggesting it was generated by a different underlying process \cite{hawkins1980identification}. Often, anomalies are identified simply to be removed, as they may result from measurement or data collection errors that could distort the analysis. In many other cases, however, the anomalies themselves hold the most valuable information: they might indicate diseases in medical diagnostics \cite{hauskrecht2013outlier}, cyberattacks in IT security \cite{sommer2010outside}, or subtle equipment malfunctions in industrial systems \cite{fujimaki2005approach} that are difficult for human operators to detect.

A vast number of anomaly detection methods has been proposed in the literature. These include techniques designed for time-series data \cite{gupta2013outlier}, graph-based methods \cite{akoglu2015graph}, and algorithms tailored for textual data (such as system logs) \cite{he2016experience}. Another major category focuses on detecting "point anomalies" within multidimensional tabular data \cite{goldstein2016comparative}, assuming no underlying sequential or temporal structure among the independent samples. In this paper, we introduce a novel algorithm that falls into this latter category.

While supervised machine learning can be used for anomaly detection, it faces two significant challenges. First, it requires large amounts of labeled training data \cite{emmott2013systematic}. Second, supervised models tend to overfit to known anomaly patterns rather than learning the boundaries of normal behavior, making them blind to novel, previously unseen types of outliers \cite{pimentel2014review}. The method proposed in this paper is strictly unsupervised. It does not require any pre-labeled training data, which is often impractical or impossible to obtain in real-world scenarios, and separates anomalies from normal instances based solely on the intrinsic properties of the data itself.

One of the most renowned algorithms in this unsupervised domain is the Isolation Forest (IF) \cite{liu2008iforest}. Based on a simple yet elegant concept, IF is robust, fast, and provides accurate results both on small and large datasets. The algorithm introduced in our paper shares conceptual similarities with the Isolation Forest but operates on entirely different mechanics. For one-dimensional data, our method yields the exact same results as an Isolation Forest containing an infinite number of trees, each grown on the full sample without a height limit. In higher dimensions, our approach utilizes an anomaly scoring metric analogous to IF, but computes it through a fundamentally different, deterministic process based on geometric distances rather than random recursive partitioning. Consequently, our algorithm is not a mere derivative of existing techniques, but a novel method that broadens the landscape of anomaly detection algorithms.

The rest of the paper is organized as follows. First, Section \ref{sec:related_work} reviews the related literature and anomaly detection methods. Following this, in Section \ref{sec:isolation_forest}, we revisit the Isolation Forest algorithm, detailing its closed-form asymptotic solution for 1D data and its geometric interpretation. The proposed algorithm is then introduced in Section \ref{sec:rift}, starting with the exact methodology, followed by a fast but highly accurate approximation, and concluding with a scalable ensemble-based variant designed for large datasets. In Section \ref{sec:numerical}, we evaluate the performance of our approach on synthetic data as well as standard benchmark datasets from ADBench, comparing it against several widely used anomaly detection techniques. Finally, Section \ref{sec:conclusions} provides the concluding remarks.

\section{Related work}\label{sec:related_work}
Unsupervised point-anomaly detection in tabular data has a long history and spans a wide range of modeling assumptions \cite{chandola2009anomaly,aggarwal2017outlier}. Early and widely used baselines include linear subspace methods based on Principal Component Analysis (PCA), which detect outliers via large reconstruction error or distance to a low-dimensional subspace \cite{shyu2003novel}.

A complementary line of work formulates anomaly detection as novelty detection via one-class classification. One-Class SVM (OCSVM) learns a decision boundary that encloses the bulk of the data in feature space and flags points falling outside as anomalies \cite{scholkopf1999support}.

Distance- and neighborhood-based methods score a point by its proximity to other samples. Classical $k$-nearest-neighbor (KNN) outlier scores use distances to the $k$-th neighbor (or related aggregates) and are attractive due to their simplicity, but can become expensive in high dimensions without approximate indexing \cite{ramaswamy2000efficient}.

Density-based approaches aim to separate points in sparse regions from those in dense regions. The Local Outlier Factor (LOF) compares a point's local density to the densities of its neighbors, thereby capturing local outliers even when global density varies \cite{breunig2000lof}. Cluster-based methods such as CBLOF first summarize the data into clusters and then assign higher outlierness to points that belong to small clusters or are far from large clusters \cite{he2003discovering}.

Histogram- and distribution-based detectors trade modeling flexibility for scalability. HBOS assumes feature independence and computes outlierness from per-feature histograms, resulting in very fast scoring on large datasets \cite{goldstein2012histogram}. More recent methods such as COPOD and ECOD avoid explicit density estimation by using copulas or empirical cumulative distribution functions to derive robust, parameter-light outlier scores \cite{li2020copod,li2022ecod}.

Ensemble methods provide another effective route to scalability and robustness. Isolation Forest (IF) isolates anomalies via random recursive partitioning and remains a strong general-purpose baseline \cite{liu2008iforest}. LODA takes an alternative lightweight ensemble approach: it builds multiple one-dimensional random projections and uses simple histograms in the projected spaces to score outliers efficiently \cite{pevny2016loda}.

Overall, these methods highlight a spectrum of trade-offs between computational cost, robustness to dimensionality and density variation, and the extent to which the detector exploits global versus local geometric structure. Our proposed method is most closely related to geometric, distance-driven detectors, but it derives its isolation score from the structure of a (possibly approximated) minimum spanning tree rather than from random partitions or explicit density models.

\section{Isolation Forest}\label{sec:isolation_forest}

\subsection{The basic algorithm}

Isolation Forest (IF) is an ensemble method that detects anomalies by how easily they can be \emph{isolated} via random recursive partitioning \cite{liu2008iforest}. Let $X=\{x_1,\ldots,x_n\}\subset\mathbb{R}^d$. An IF model builds $t$ isolation trees, each grown from a random subsample $X'\subset X$ of size $\psi$.

An isolation tree is constructed recursively. At a node containing points $S\subseteq X'$, sample a feature index $q\in\{1,\ldots,d\}$ uniformly at random and then sample a split value $p$ uniformly from $[\min_{x\in S} x^{(q)},\max_{x\in S} x^{(q)}]$. Split $S$ into left/right children by the threshold $p$ on feature $q$. Recursion stops when $|S|\le 1$ (or all points in $S$ are identical) or when a maximum depth is reached.

For a point $x$, let $h_{T_k}(x)$ be its path length (number of edges traversed from the root to the external node) in tree $T_k$. The forest-average path length for $x$ is
\begin{equation}
  \mathbb{E}[h(x)] \,=\, \frac{1}{t}\sum_{k=1}^{t} h_{T_k}(x).
\end{equation}
To normalize for sample size, IF uses the average path length of an unsuccessful search in a binary search tree,
\begin{equation}
  c(\psi) \,=\, 2H_{\psi-1} - \frac{2(\psi-1)}{\psi},
\end{equation}
where $H_m=\sum_{i=1}^{m} \frac{1}{i}$ is the $m$-th harmonic number. The standard anomaly score is then
\begin{align}\label{eq:sx}
  s(x,\psi) \,=\, 2^{-\mathbb{E}[h(x)]/c(\psi)},
\end{align}
so that smaller $\mathbb{E}[h(x)]$ (easier isolation) yields larger anomaly scores.

Many variants of the IF algorithm have been proposed. Some change the way samples are randomly split (e.g., Extended Isolation Forest \cite{hariri2019extended}), while others adapt it to the online setting (e.g., BWOAIF \cite{hannakbilateral} and IForestASD \cite{ding2013anomaly}). In this paper, we consider only the original algorithm.

\subsection{Explicit solution for one-dimensional data}

Theorem~3.5 of \cite{zheng2025iforestbias} gives an \emph{exact} closed-form expression for the expected depth (i.e., the expected path length) assigned by IF in the one-dimensional setting, given an ordered sample $x_1 < \cdots < x_n$, for isolation trees grown on the full sample without a height limit. The expected depth of point $x_i$, denoted by $\bar{h}(x_i)$, obtained at $t \to \infty$, can be written explicitly in terms of the spacings $x_j - x_{j-1}$ between adjacent points:

\begin{align}
\label{eq:1dsolution}
    \bar{h}(x_i)
=
\sum_{j=2}^{i} \frac{x_j - x_{j-1}}{x_i - x_{j-1}}
+
\sum_{j=i+1}^{n} \frac{x_j - x_{j-1}}{x_j - x_i}.
\end{align}

This result makes it possible to obtain the 1D IF anomaly score from the sample geometry, rather than only via Monte Carlo over random trees. Appendix \ref{sec:proof1d} provides an alternative proof that reaches the same results in a few elementary algebraic steps. 

The authors of \cite{zheng2025iforestbias} also argue that \eqref{eq:1dsolution} does not extend trivially to $d > 1$. In higher dimensions, the expected depth depends on the full \emph{relative configuration} of the points in $\mathbb{R}^d$, not merely on their ordering along a line, so different point arrangements can lead to genuinely different functional forms. Moreover, since each split is made along a randomly chosen coordinate, both the set of admissible splitting coordinates and the resulting transition structure may change after every split. As a consequence, the size of an exact state-space description of the process grows rapidly with the dimension.

\section{Description of the proposed method}\label{sec:rift}

\subsection{Concept of the algorithm}

Although the exact solution of $\bar{h}(x)$ is not known for $d>1$, we can develop new anomaly detection methods inspired by the geometric interpretation of the one-dimensional result. 

To understand the mechanics of the closed-form 1D solution \eqref{eq:1dsolution}, it is helpful to interpret the summation geometrically, as illustrated by the example in Figure~\ref{fig:ex1d}. Using the notation introduced in the example, with $x_4$ as the target point, the anomaly score given by~\eqref{eq:1dsolution} can be rewritten as

\begin{equation}
    \bar{h}(x_i) = \sum_{k, k\neq i} \frac{d_k}{\ell_k}.
    \label{eq:1d_geometric_sum}
\end{equation}

Each term in Equation \ref{eq:1d_geometric_sum} describes a geometric ratio from the perspective of the target point ($x_4$ in Figure \ref{fig:ex1d}):
\begin{itemize}
    \item \textbf{The numerator $d_k$} represents the length of the empty space between consecutive data points (the 1D edges). This captures the \textit{local sparsity} of that specific region.
    \item \textbf{The denominator $\ell_k$} represents the ``line-of-sight'' distance from the target point to the evaluated edge (specifically, its further endpoint). This penalizes the impact of an edge based on its distance from the target.
\end{itemize}

Geometrically, this quotient can be interpreted as the 1D \textit{apparent size} (or 1D visual angle) of the segment $d_k$ as observed from $x_4$. The isolation of a point is given by the sum of these apparent sizes. 

\begin{figure}
    \centering
\begin{tikzpicture}[
    point/.style={circle, fill=black, inner sep=2pt},
    target/.style={circle, fill=red, inner sep=2.5pt},
    edge/.style={thick, shorten >=2pt, shorten <=2pt},
    dist/.style={dashed, thick, blue!80}
]

\draw[gray, very thin] (-1,0) -- (10,0);

\node[point, label=above:$x_1$] (x1) at (0,0) {};
\node[point, label=above:$x_2$] (x2) at (1.5,0) {};
\node[point, label=above:$x_3$] (x3) at (3.0,0) {};
\node[target, label=above:$x_4$] (x4) at (5.0,0) {};
\node[point, label=above:$x_5$] (x5) at (6.5,0) {};
\node[point, label=above:$x_6$] (x6) at (8.0,0) {};
\node[point, label=above:$x_7$] (x7) at (9.5,0) {};

\draw[edge] (x1) -- node[above] {$d_1$} (x2);
\draw[edge] (x2) -- node[above] {$d_2$} (x3);
\draw[edge] (x3) -- node[above] {$d_3$} (x4);
\draw[edge] (x4) -- node[above] {$d_5$} (x5);
\draw[edge] (x5) -- node[above] {$d_6$} (x6);
\draw[edge] (x6) -- node[above] {$d_7$} (x7);

\draw[dist] (x3 |- 0,-0.8) -- node[below] {$\ell_3$} (x4 |- 0,-0.8);
\draw[dist] (x2 |- 0,-1.6) -- node[below] {$\ell_2$} (x4 |- 0,-1.6);
\draw[dist] (x1 |- 0,-2.4) -- node[below] {$\ell_1$} (x4 |- 0,-2.4);

\draw[dist] (x4 |- 0,-0.8) -- node[below] {$\ell_5$} (x5 |- 0,-0.8);
\draw[dist] (x4 |- 0,-1.6) -- node[below] {$\ell_6$} (x6 |- 0,-1.6);
\draw[dist] (x4 |- 0,-2.4) -- node[below] {$\ell_7$} (x7 |- 0,-2.4);

\foreach \x in {x1,x2,x3,x4,x5,x6,x7} {
    \draw[gray, dotted, very thin] (\x) -- (\x |- 0,-2.6);
}
\end{tikzpicture}
    \caption{Components of the explicit solution of the anomaly score for $x_4$}
    \label{fig:ex1d}
\end{figure}

In the 1D scenario, the spatial ordering of data points is trivial and the terms of the summation ($d_k / \ell_k$) rely on a strictly defined sequence of edges and distances.  The topological structure is a simple linear chain. If we observe the environment from a target point, every other node in the sequence is associated with exactly one adjacent edge leading further away from the target. Consequently, there is a perfect one-to-one mapping between the evaluated nodes and the spatial gaps ($d_k$) they represent. 

In the sequel, we generalize this mathematical framework to higher-dimensional spaces ($d \ge 2$). In higher-dimensional space, however, edge selection is a critical, non-trivial question. To adapt the geometric ratio obtained in 1D, we must find a graph structure that preserves the aforementioned topological properties. We propose the \emph{Minimum Spanning Tree (MST)} based on Euclidean distances as the skeletal representation of the data. 

When an MST is constructed over the dataset, it creates a fully connected, cycle-free graph. If we temporarily view the target point $x_t$ as the root of this tree, the MST branches outwards to connect all other points. In any directed tree, every node (except the root) has exactly one incoming edge. Therefore, for any given node $x_k$, we can uniquely assign the length of the MST edge connecting it to its parent (in the direction of the target) as the numerator $d_k$. The denominator $\ell_k$ naturally generalizes to the direct Euclidean distance between the target point $x_t$ and the node $x_k$ (see Figure \ref{fig:ex2d}).

A normal point (deeply embedded within a cluster) is surrounded by numerous local edges at short distances ($\ell_k$). The sum of these ``apparent sizes'' ($d_k/\ell_k$) accumulates to a relatively large $\bar{h}(x_i)$ value, analogous to a deep terminal node in a standard Isolation Tree. An anomalous point lies far from most of the data. From such a point, dense clusters are distant (large $\ell_k$), while their internal edges ($d_k$) remain relatively small, shrinking their quotient to nearly zero. The primary contribution to the sum comes solely from the long bridging edge that connects the anomaly to the rest of the graph, making the total $\bar{h}(x_i)$ small. Defining the final anomaly score as the negative of $\bar{h}(x_i)$, the algorithm produces low scores for normal samples and high scores for anomalies.

This MST-based formulation provides an important theoretical consequence: if we apply this $d$-dimensional MST approach to one-dimensional data, it perfectly recovers the exact 1D closed-form solution given by eq. \eqref{eq:1d_geometric_sum}. In 1D, the MST naturally degenerates into the sequential linear chain connecting adjacent points, meaning that the MST edges $d_k$ and the distances $\ell_k$ become mathematically identical to those of our original 1D derivation.

\begin{figure}
    \centering
\begin{tikzpicture}[
    point/.style={circle, fill=black, inner sep=2pt},
    target/.style={circle, fill=red, inner sep=2.5pt},
    edge/.style={thick, shorten >=2pt, shorten <=2pt},
    dist/.style={dashed, thick, blue!80, shorten >=2pt, shorten <=2pt}
]

\node[point, label=above:$x_1$] (x1) at (0, 0) {};
\node[point, label=above left:$x_2$] (x2) at (2, 2.2) {};
\node[point, label=below left:$x_3$] (x3) at (3.5, 0) {};
\node[target, label=above:$x_4$] (x4) at (5, 3) {};
\node[point, label=above:$x_5$] (x5) at (6, 4.5) {};
\node[point, label=below right:$x_6$] (x6) at (7.5, 2.5) {};
\node[point, label=below:$x_7$] (x7) at (10, 0.5) {};

\draw[edge] (x1) -- node[above left] {$d_1$} (x2);
\draw[edge] (x2) -- node[below left] {$d_2$} (x3);
\draw[edge] (x3) -- node[above left, pos=0.6] {$d_3$} (x4);
\draw[edge] (x4) -- node[above left] {$d_5$} (x5);
\draw[edge] (x4) -- node[above right, pos=0.4] {$d_6$} (x6);
\draw[edge] (x6) -- node[above right] {$d_7$} (x7);

\draw[dist] (x4) -- node[fill=white, inner sep=1pt] {$\ell_1$} (x1);
\draw[dist] (x4) -- node[fill=white, inner sep=1pt] {$\ell_2$} (x2);
\draw[dist] (x4) -- node[fill=white, inner sep=1pt] {$\ell_7$} (x7);

\draw[dist, bend left=15] (x4) to node[fill=white, inner sep=1pt] {$\ell_3$} (x3);
\draw[dist, bend right=15] (x4) to node[fill=white, inner sep=1pt] {$\ell_5$} (x5);
\draw[dist, bend left=15] (x4) to node[fill=white, inner sep=1pt] {$\ell_6$} (x6);

\end{tikzpicture}
    
    \caption{Components of the proposed algorithm in 2D}
    \label{fig:ex2d}
\end{figure}

Note that relying on the MST is a design choice. Since higher-dimensional spaces lack absolute ordering, the selection of the topological backbone is an open question. Other edge-selection strategies could also be employed to define $d_k$. Nevertheless, the MST provides a parameter-free, globally connected, and theoretically sound generalization.

The proposed geometric metric is inherently density-invariant. Since the topological relationship is evaluated as a quotient ($d_k/\ell_k$), scaling the density of a cluster proportionally scales both the local empty spaces ($d_k$) and the perspective distances ($\ell_k$), leaving the resulting sum effectively unchanged. This self-normalization allows the algorithm to recognize that a point within a sparse cloud is topologically just as normal within its own local context as a point in a tightly packed core.

Following the geometric interpretation described above, we name the proposed method RIFT (Relative Isolation From Trees).

\subsection{The exact solution using MST}\label{sec:exact}

The baseline algorithm is shown in Algorithm \ref{alg:global_mst_baseline}. To calculate the anomaly scores for all samples, it is enough to construct the MST only once. The main loop iterates over all sample points and selects them as target. The body of the loop is basically a breadth-first traversal of the MST starting from the target point. One traversal is enough to get the anomaly score for the target point. Note that $\bar{h}(x_i)$ is not the anomaly score itself, it is similar to the expected tree height of isolation forests, for which a lower value means higher anomaly score. It can be scaled to the $[0,1]$ range similar to eq. \eqref{eq:sx}, but it can also be used directly for the ROC-AUC tests.

\begin{algorithm}[h]
\caption{RIFT-exact: Global MST Isolation Score (Baseline)}
\label{alg:global_mst_baseline}
\begin{algorithmic}[1]
\Require Dataset $X = \{x_1, x_2, \dots, x_n\} \subset \mathbb{R}^D$
\Ensure Anomaly scores $S = \{s_1, s_2, \dots, s_n\}$
\vspace{1mm}

\State \textbf{Initialize} a complete graph $G = (X, E)$ where $weight(x_i, x_j) = \|x_i - x_j\|_2$
\State Construct the Minimum Spanning Tree (MST) from $G$ 
\State Initialize an empty array $S$ of size $n$

\For{each target point $x_t \in X$}
    \State $sum_t \gets 0$
    \State Initialize a queue $Q \gets \{x_t\}$
    \State Initialize a set of visited nodes $V \gets \{x_t\}$
    
    \While{$Q$ is not empty}
        \State Pop current node $u$ from $Q$
        
        \For{each neighbor $v$ of $u$ in the MST}
            \If{$v \notin V$}
                \State Add $v$ to $V$
                \State Push $v$ to $Q$
                
                \State $d_v \gets \|x_u - x_v\|_2$ 
                \State $\ell_v \gets \|x_t - x_v\|_2$ 
                
                    \State $sum_t \gets sum_t + \frac{d_v}{\ell_v}$
            \EndIf
        \EndFor
    \EndWhile
    
    \State $s_t \gets -sum_t$ \Comment{Negative sum assigns higher scores to anomalies}
\EndFor

\State \Return $S$
\end{algorithmic}
\end{algorithm}

The main drawback of the algorithm is the computational complexity and memory requirement of the MST construction, due to which it can be applied only on datasets of moderate size.

\subsection{Using MST approximations}\label{sec:approx}

While computing $\bar{h}(x)$ with the exact MST (Section \ref{sec:exact}) provides a strong baseline, constructing and traversing a single global MST becomes impractical on large datasets ($N \gg 10^4$).

To reduce this cost, we approximate the MST using a sparse $k$-nearest-neighbor (kNN) graph: for each sample we keep only edges to its $k$ closest neighbors and compute an MST on this reduced edge set.

We start from the kNN graph and build an MST on it. If the kNN graph is not connected, we increase $k$ and repeat until the graph (and thus the resulting MST approximation) becomes connected.

Let $n$ be the number of vertices (samples) and let $e$ be the number of edges in the graph. Classical MST algorithms such as Kruskal's run in $O(e\log e)$ time. For a fully connected graph we have $e = \frac{n(n-1)}{2} = O(n^2)$, which makes exact MST construction expensive. In contrast, a kNN graph has roughly $e \approx kn$ edges, which is far cheaper when $k \ll n$.

In some datasets we encountered many duplicate samples (thousands of points at exactly the same location). This can force $k$ to grow very large before the kNN graph becomes connected, which hurts runtime. A simple workaround is to build the approximate MST on the set of unique samples and then map duplicates back to their representative node. This solution keeps the graph sparse while still representing every sample.

\subsection{Ensemble-based implementation}\label{sec:ensemble}

 As another option to avoid the calculation and the traversal of huge MSTs and to enhance the robustness of the detector, we propose an ensemble-based approach inspired by the sub-sampling and feature bagging (random subspace) strategies. Instead of constructing a single global tree, we build a forest of $M$ independent Minimum Spanning Trees. Each tree $T_j$ is constructed from a small, randomly selected subset of the data instances $\psi \subset X$ (typically $|\psi| = 256$), and evaluated over a randomly selected subset of features $F_j \subseteq \{1, \dots, D\}$.

The isolation score of a target point $x_i$ (which may or may not have been part of the training subset) is evaluated as follows. For a given tree $T_j$, the geometric distances are computed exclusively within its respective feature subspace $F_j$. We first identify the connection point (anchor) $x^* \in T_j$ that is closest to the target point in this subspace:
\begin{align}
x^* = \arg \min_{x_k \in T_j} \|x_i - x_k\|_{F_j}.
\end{align}
The target point $x_i$ is attached to the tree as a new leaf node connected to $x^*$ via an edge of length $d_* = \|x_i - x^*\|_{F_j}$. The isolation value for the point $x_i$ relative to the tree $T_j$ is then computed as the sum of the apparent sizes of all edges in the augmented tree:
\begin{align}
    \bar{h}(x_i, T_j) = \frac{d_*}{\ell_*} + \sum_{k \in T_j, x_k \neq x^*} \frac{d_k}{\ell_k}.\label{eq:ensemble_score}
\end{align}
In eq. \eqref{eq:ensemble_score}, for all nodes $k$ already present in $T_j$, the numerator $d_k$ is the pre-calculated MST edge weight within the subspace $F_j$, while the denominator $\ell_k$ is the Euclidean distance between $x_i$ and $x_k$ in the same subspace. The final isolation score is obtained by averaging the results over the entire forest: 
\begin{align}
\bar{h}_{forest}(x_i) = \frac{1}{M} \sum_{j=1}^M \bar{h}(x_i, T_j).    
\end{align}

This ensemble approach significantly reduces the computational overhead, as the complexity of constructing and traversing $M$ small trees is significantly lower than that of a single global MST. The forest structure combined with feature bagging introduces further statistical advantages. Sub-sampling reduces the \textit{masking effect}, where a tight cluster of anomalies might otherwise appear as a dense normal region according to the global detector. By isolating members of such micro-clusters in different sub-samples, they get revealed as outliers. The ensemble approach also prevents \textit{swamping}, ensuring that redundant normal instances in high-density regions do not inflate the isolation scores of distant anomalies. Furthermore, feature bagging mitigates the curse of dimensionality, preventing irrelevant or noisy features from dominating the distance calculations across the entire forest.

\begin{algorithm}[h]
\caption{RIFT-ensemble: Sub-sampled MST-Forest with Feature Bagging}
\label{alg:ensemble_mst}
\begin{algorithmic}[1]
\Require Dataset $X = \{x_1, \dots, x_n\} \subset \mathbb{R}^D$, Number of trees $M$, Sub-sample size $\psi$, subset size $q \le D$
\Ensure Ensemble anomaly scores $S = \{s_1, \dots, s_n\}$
\vspace{1mm}

\State Initialize an array $S$ of size $n$ with zeros

\For{$m = 1$ \textbf{to} $M$}
    \State Draw a random subset of instances $X_m \subset X$ of size $\psi$
    \State Draw a random subset of features $F_m \subseteq \{1, \dots, D\}$ of size $q$
    \State Construct a complete graph $G_m$ on $X_m$ where $weight(u, v) = \|u - v\|_{F_m}$
    \State Construct the Minimum Spanning Tree $T_m$ from $G_m$

    \For{each target point $x_t \in X$}
        \State \textit{// Phase 1: Topological Grafting (Anchor Selection in Subspace)}
        \State Find anchor node $x^* \gets \arg\min_{v \in X_m} \|x_t - v\|_{F_m}$
        \State $d_{hook} \gets \|x_t - x^*\|_{F_m}$ 
        \State $\ell_{hook} \gets \|x_t - x^*\|_{F_m}$ 
        \State $sum_{t, m} \gets \frac{d_{hook}}{\ell_{hook}}$

        \State \textit{// Phase 2: Tree Traversal (Referencing Algorithm \ref{alg:global_mst_baseline})}
        \State \textbf{Traverse} $T_m$ via BFS starting from $x^*$ identically to Algorithm \ref{alg:global_mst_baseline}:
        \Statex \qquad For each newly visited node $v$, accumulate $\frac{d_v}{\ell_v}$ into $sum_{t, m}$, where:
        \Statex \qquad \quad $d_v$ is the incoming edge weight in $T_m$
        \Statex \qquad \quad $\ell_v = \|x_t - v\|_{F_m}$
        
        \State $S[t] \gets S[t] + sum_{t, m}$
    \EndFor
\EndFor

\For{$t = 1$ \textbf{to} $n$}
    \State $s_t \gets -\frac{S[t]}{M}$ \Comment{Average over the forest and negate}
\EndFor

\State \Return $S$
\end{algorithmic}
\end{algorithm}

It is important to note that while the introduction of data sub-sampling and feature bagging makes the algorithm non-deterministic, the MST-Forest exhibits a higher level of structural consistency than the traditional Isolation Forest. The Isolation Forest relies on pervasive randomness: stochastic sub-sampling followed by completely random axis selection and split-point partitioning at every single node of every tree. Our method limits randomness strictly to the initial instance and feature sampling phase of each tree. Once a data subset and a feature subspace are selected, the resulting MST provides a deterministic representation of the local topology within that specific view. Consequently, the MST-Forest maintains the robustness of ensemble methods while exhibiting lower variance in its anomaly scores.

\section{Numerical study}\label{sec:numerical}

The proposed method was implemented in Python, utilizing the Numba library for computational optimization. To ensure accessibility and ease of integration, the implementation provides an API that is fully consistent with standard \textit{scikit-learn} estimators.

Since the algorithm relies on Euclidean distances, proper feature scaling prior to execution is critical. However, because anomalies frequently distort data distributions in an asymmetric manner, traditional standardization or min-max scaling are not optimal. For the numerical experiments presented in this paper, we applied a robust scaling approach that can properly handle the influence of extreme outliers: we used the median and the MAD, applied on the unique data points, to scale each feature.

\subsection{Anomaly heatmaps with synthetic data}


The way anomalies are distributed strongly affects how well different detectors perform. In this section we show that the proposed method handles two situations that are difficult for many detectors: datasets with varying density, and anomalies that form tight clusters. It handles both by construction, without hyperparameters, and deterministically.


We use the three synthetic datasets shown in Figure~\ref{fig:synthds}. The first contains two perpendicular Gaussian blobs, with the anomalies grouped around two points. The second illustrates varying density: the normal data form one sparse and one dense cloud, while the anomalies are arranged in a ring around the dense cloud. In the third, the anomalies form a single tight cluster.

\begin{figure}
    \centering
    \includegraphics[width=\linewidth]{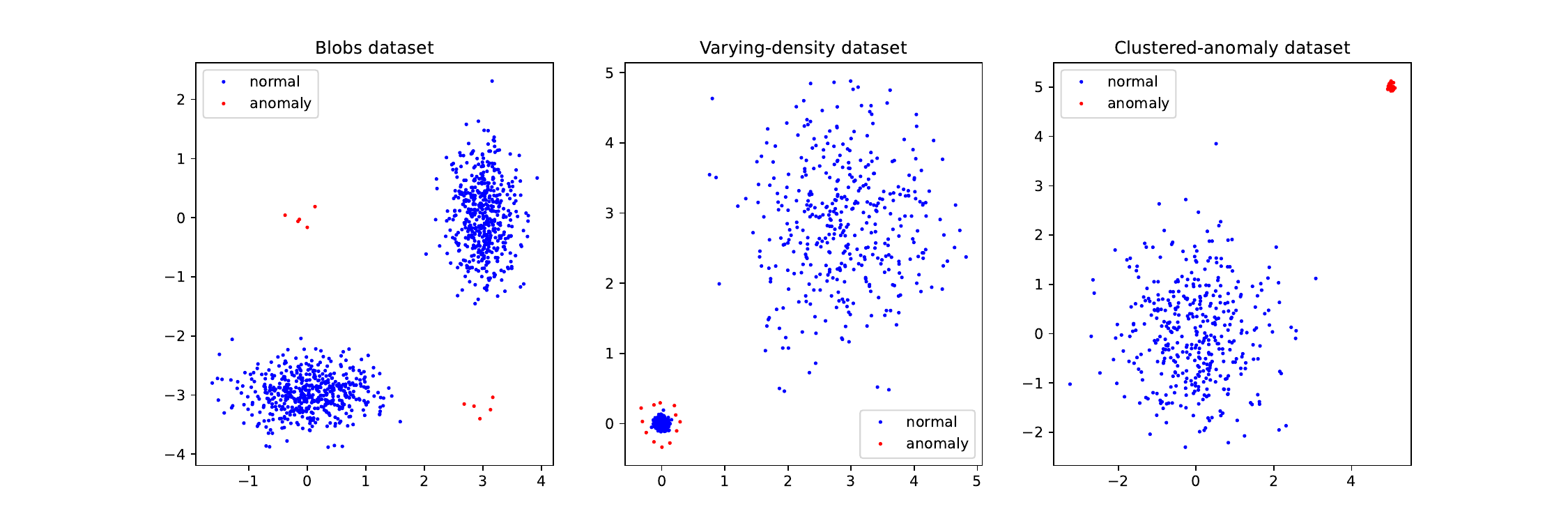}
    \caption{Synthetic datasets used in the comparison}
    \label{fig:synthds}
\end{figure}


The methods compared are Isolation Forest, KNN (which we will later see is surprisingly effective on real data), LOF (which handles varying-density data particularly well), and two further methods, HBOS and ECOD.


The ROC AUC results are summarised in Table~\ref{tab:synthscore}. LOF and KNN perform very well on the Blob and Varying-density datasets but poorly on the Clustered-anomalies dataset, while HBOS and ECOD show the opposite pattern. Isolation Forest and the proposed method achieve high ROC AUC in both regimes — the proposed method while also producing deterministic results.

\begin{table}[h]
    \centering
    \begin{tabular}{l|ccc}
        \toprule
        Method & Blob & Varying density & Clustered anomalies \\
        \midrule
        Isolation Forest & 0.995 & 0.961 & 0.956 \\
        LOF & 0.999 & 1.000 & 0.365 \\
        KNN & 0.995 & 0.846 & 0.002 \\
        HBOS & 0.324 & 0.432 & 0.947 \\
        ECOD & 0.526 & 0.656 & 0.983\\
        RIFT (proposed) & 0.999 & 0.982 & 0.993\\
        \bottomrule
    \end{tabular}
    \caption{ROC AUC scores on the synthetic datasets}
    \label{tab:synthscore}
\end{table}


The heatmaps (Figures~\ref{fig:blobsheatmap}, \ref{fig:varyingheatmap}, \ref{fig:clusteredheatmap}) are informative on how the methods behave. Isolation Forest produces axis-parallel artifacts, LOF radial ones, and HBOS a blocky, axis-parallel surface. For KNN, anomalies close to a cluster are assigned normal scores too quickly. RIFT, the proposed method, yields a smooth anomaly surface, free of such artifacts.

\begin{figure}
    \centering
    \includegraphics[width=\linewidth]{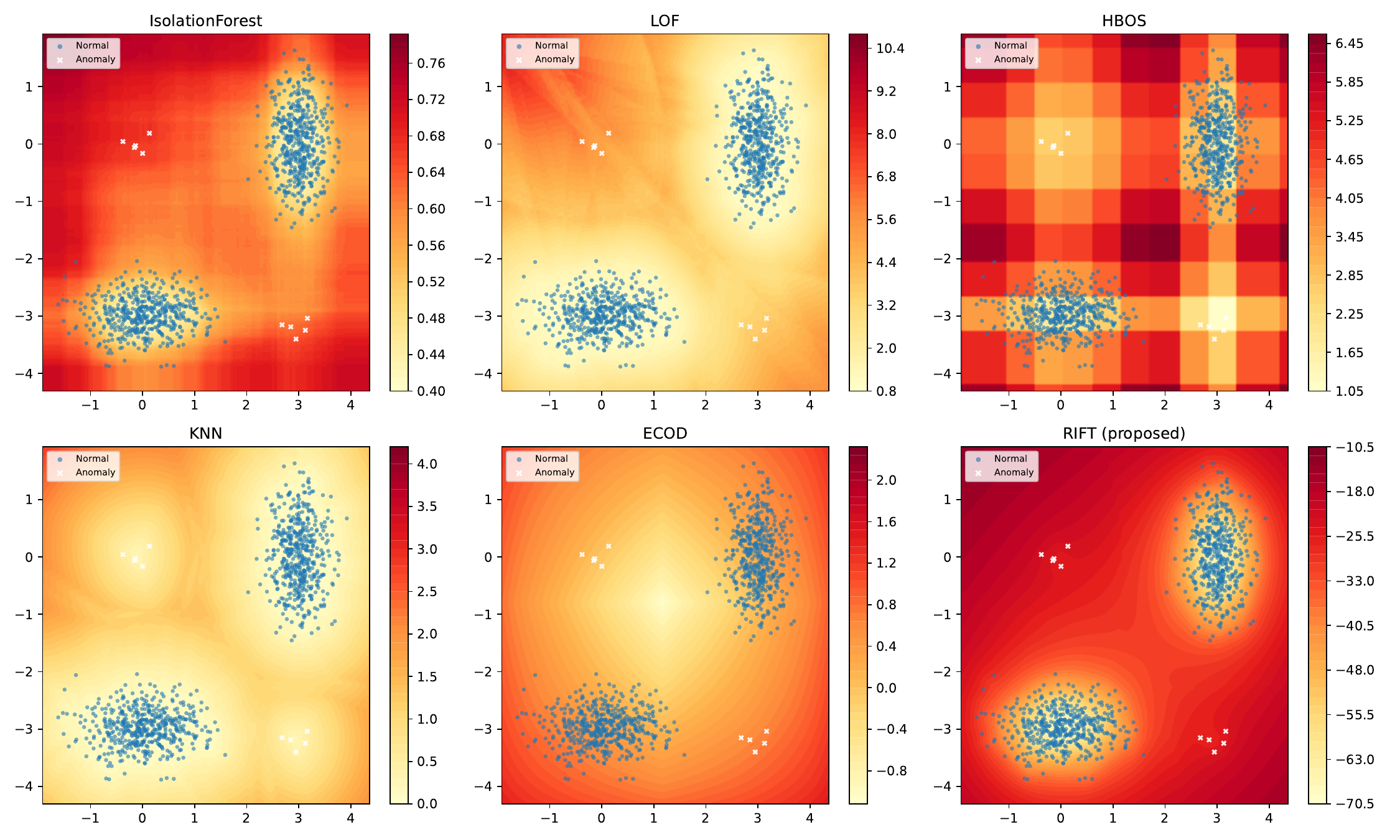}
    \caption{Anomaly scores on the ``blobs'' dataset}
    \label{fig:blobsheatmap}
\end{figure}
\begin{figure}
    \centering
    \includegraphics[width=\linewidth]{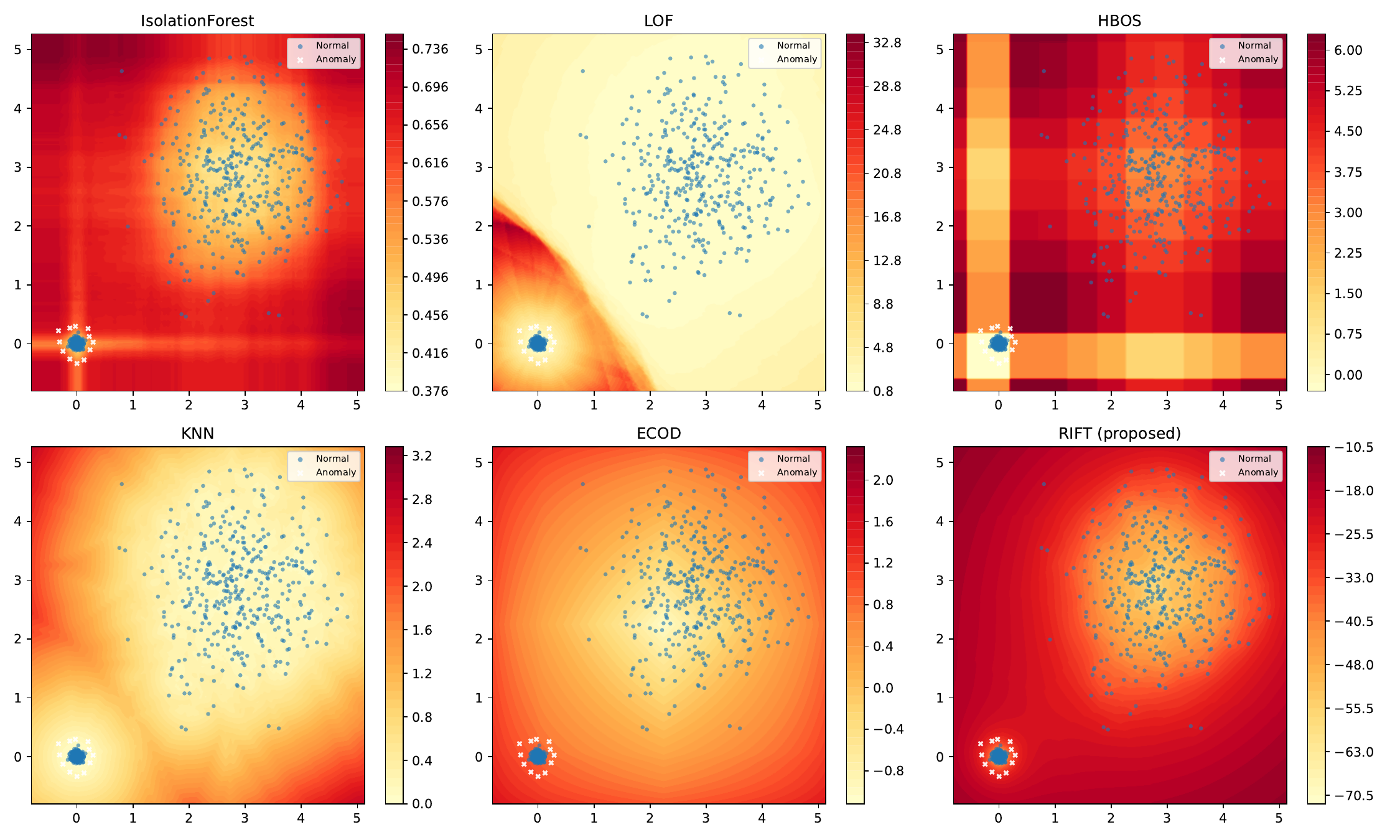}
    \caption{Anomaly scores on the ``varying density'' dataset}
    \label{fig:varyingheatmap}
\end{figure}
\begin{figure}
    \centering
    \includegraphics[width=\linewidth]{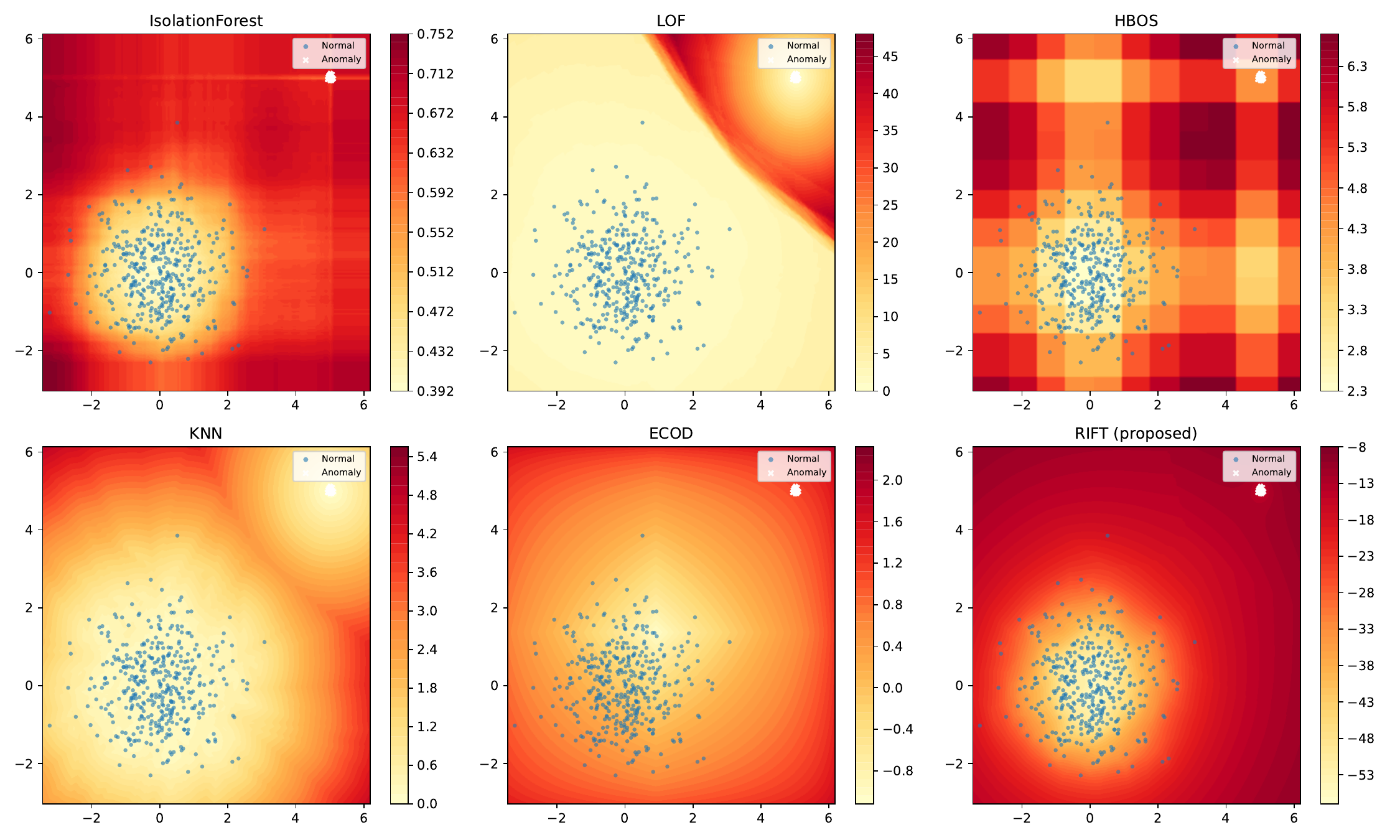}
    \caption{Anomaly scores on the ``clustered anomalies'' dataset}
    \label{fig:clusteredheatmap}
\end{figure}


Thanks to the use of a reachability-based distance, LOF is known to handle varying-density data well, and to some extent clustered anomalies. As Figure~\ref{fig:synthauc} shows, LOF ($k=10$) performs well on the varying-density dataset, but on the clustered-anomaly dataset its performance collapses once the anomaly cluster grows larger than its parameter $k$. On that dataset ECOD performs well. The proposed method is strong in both cases, and achieves this without any hyperparameter.

\begin{figure}
    \centering
    \includegraphics[width=0.49\linewidth]{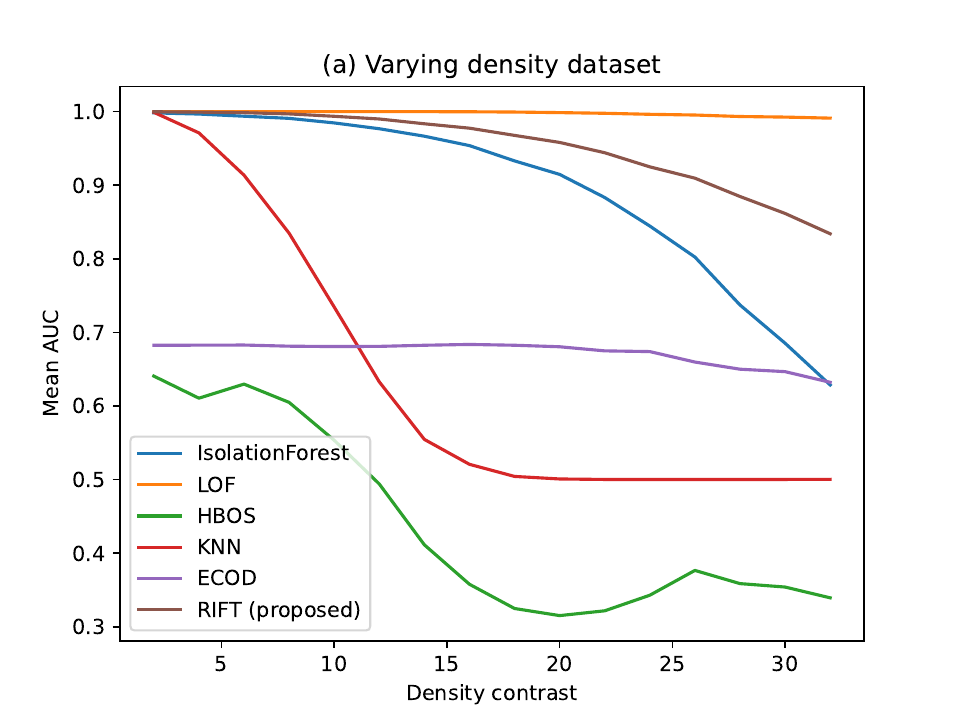}
    \includegraphics[width=0.49\linewidth]{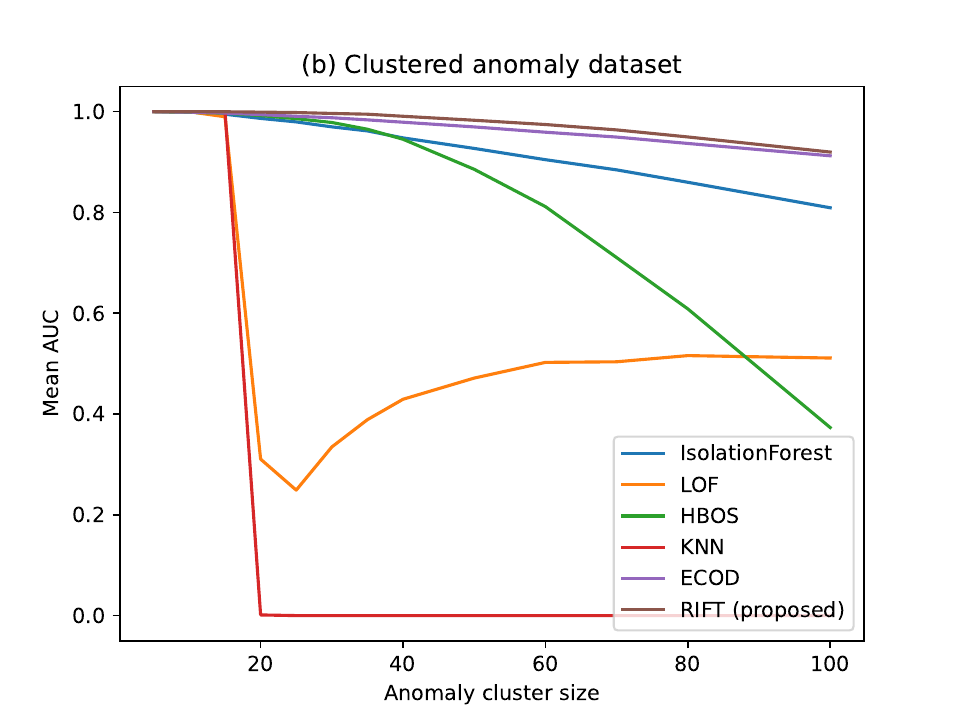}
    \caption{Performance of anomaly detection methods on two synthetic datasets}
    \label{fig:synthauc}
\end{figure}


No anomaly detector is perfect, each has its own weakness. In the following sections we use real data to also show which kinds of anomalies the proposed method handles less well.

\subsection{Evaluation of the ensemble based approach}

The ensemble-based approach is characterized by three hyper-parameters:
\begin{itemize}
\item the ensemble size (the total number of generated trees),
\item the number of data samples per tree (the degree of data sub-sampling),
\item the feature bagging ratio (the fraction of the total dimensions considered during the construction of a given tree).
\end{itemize}
In this section, we investigate the empirical impact of these parameters using several standard benchmark datasets.

First, feature bagging is disabled to investigate the impact of the remaining two hyper-parameters. Initially, both parameters are set to sufficiently large baseline values and their effects are observed by systematically decreasing them one at a time. The evaluation is conducted using the benchmark datasets detailed in Table \ref{tab:datasets}, which are diverse in terms of sample size, dimensionality, and application domain. Each configuration is executed 100 times using different random seeds.

\begin{table}[]
    \centering
    \begin{tabular}{l|c|c|c}
        \toprule
         \textbf{Dataset} & \textbf{Samples} & \textbf{Features} & \textbf{Category}  \\
         \midrule
         annthyroid & 7200 & 6 & Healthcare \\
         magic.gamma & 19020 & 10 & Physical \\
         mammography & 11183 & 6 &  Healthcare \\
         mnist & 7603 & 100 & Image \\
         satellite & 6435 & 36 & Astronautics \\
         \bottomrule
    \end{tabular}
    \caption{Datasets used in the comparison}
    \label{tab:datasets}
\end{table}

The top part of Table \ref{tab:ens} presents the average AUC results across 100 independent runs for various ensemble size configurations. The results achieved with an exceptionally small number of trees are remarkably close to those obtained with massive forests. However, increasing the size of the ensemble provides a clear benefit in terms of the variance of the results. As shown in the middle part of Table \ref{tab:ens}, the already low variance observed with a few trees approaches zero as the number of trees exceeds one hundred.

The results also indicate that, in most scenarios, the ensemble approach closely approximates the performance of the exact MST baseline while being superior in terms of computational efficiency. 

However, for the \textit{satellite} dataset the ensemble method performed significantly worse than the exact approach. The reason could be that by restricting the trees to small sub-samples, the ensemble fails to capture the complete global manifold, resulting in a loss of structural information.

However, in the case of the \textit{mammography} dataset, the ensemble method outperformed the exact baseline. Our investigations revealed that this dataset contains a relatively high number of coincident or near-coincident anomalous samples. These tightly packed micro-clusters trigger a \textit{masking} effect: the exact MST algorithm perceives the near-zero distances between these identical or highly similar anomalies as evidence of a dense, normal region. The sub-sampling of the ensemble resolves this issue. By constructing trees from small random subsets, these dense anomaly clusters are broken apart, the anomalies lose their masking and are forced to connect to the tree via longer edges, making the anomalies easier to identify.

\begin{table}
\centering
\begin{tabular}{l|c|ccccccc}
\toprule
 & exact & \multicolumn{7}{c}{Ensemble size =} \\
Dataset & MST & 4 & 8 & 16 & 32 & 64 & 128 & 256 \\
\midrule
 \multicolumn{9}{c}{Mean AUC scores across seeds}\\
\midrule
annthyroid & 0.887 & 0.888 & 0.888 & 0.888 & 0.888 & 0.888 & 0.888 & 0.888 \\
magic.gamma & 0.693 & 0.695 & 0.694 & 0.694 & 0.695 & 0.695 & 0.695 & 0.695 \\
mammography & 0.836 & 0.858 & 0.859 & 0.859 & 0.859 & 0.859 & 0.859 & 0.859 \\
mnist & 0.845 & 0.840 & 0.840 & 0.840 & 0.840 & 0.840 & 0.840 & 0.840 \\
satellite & 0.692 & 0.678 & 0.677 & 0.677 & 0.677 & 0.677 & 0.677 & 0.677 \\
\midrule
\multicolumn{9}{c}{Standard deviation of AUC scores across seeds}\\
\midrule
annthyroid & 0.000 & 0.004 & 0.002 & 0.002 & 0.001 & 0.001 & 0.001 & 0.001 \\
magic.gamma & 0.000 & 0.005 & 0.003 & 0.002 & 0.002 & 0.001 & 0.001 & 0.001 \\
mammography & 0.000 & 0.005 & 0.003 & 0.002 & 0.001 & 0.001 & 0.001 & 0.001 \\
mnist & 0.000 & 0.007 & 0.005 & 0.004 & 0.002 & 0.002 & 0.001 & 0.001 \\
satellite & 0.000 & 0.006 & 0.004 & 0.003 & 0.002 & 0.001 & 0.001 & 0.001 \\
\midrule
\multicolumn{9}{c}{Execution times [s]}\\
\midrule
annthyroid & 0.461 & 0.007 & 0.015 & 0.027 & 0.051 & 0.097 & 0.196 & 0.385 \\
magic.gamma & 3.380 & 0.016 & 0.029 & 0.058 & 0.116 & 0.227 & 0.452 & 0.901 \\
mammography & 0.993 & 0.009 & 0.017 & 0.033 & 0.065 & 0.129 & 0.256 & 0.509 \\
mnist & 1.084 & 0.022 & 0.043 & 0.083 & 0.163 & 0.320 & 0.633 & 1.257 \\
satellite & 0.460 & 0.012 & 0.023 & 0.045 & 0.088 & 0.173 & 0.345 & 0.684 \\
\bottomrule
\end{tabular}
    \caption{The effect of different ensemble sizes}
    \label{tab:ens}
\end{table}

Next, we investigate the influence of the sub-sample size (the number of data samples per tree) while keeping the ensemble size fixed at $100$, which aligns with the default parameter of the traditional Isolation Forest. As demonstrated in Table \ref{tab:smpl}, this hyperparameter exhibits a more pronounced impact on the overall performance than the total number of trees. For the majority of the datasets, increasing the sub-sample size systematically improves the AUC scores. Interestingly, for certain datasets such as \textit{mammography}, restricting the trees to a significantly smaller sample size yields better accuracy. As detailed earlier, this phenomenon follows from the masking effect: trees constructed from fewer samples can more effectively break apart relatively dense clusters of anomalies.

The standard deviations of the predictions are summarized in the middle part of Table \ref{tab:smpl}. The results indicate a clear trend: as the number of instances per tree increases, the variance of the ensemble decreases.

\begin{table}
\centering
\begin{tabular}{l|c|ccccccc}
\toprule
 & exact & \multicolumn{7}{c}{Number of samples per tree =} \\
Dataset & MST & 4 & 8 & 16 & 32 & 64 & 128 & 256 \\
\midrule
 \multicolumn{9}{c}{Mean AUC scores across seeds}\\
\midrule
annthyroid & 0.887 & 0.865 & 0.875 & 0.881 & 0.885 & 0.887 & 0.888 & 0.888 \\
magic.gamma & 0.693 & 0.688 & 0.690 & 0.693 & 0.695 & 0.695 & 0.695 & 0.695 \\
mammography & 0.836 & 0.914 & 0.904 & 0.892 & 0.881 & 0.872 & 0.865 & 0.859 \\
mnist & 0.845 & 0.820 & 0.827 & 0.832 & 0.835 & 0.838 & 0.839 & 0.840 \\
satellite & 0.692 & 0.611 & 0.626 & 0.641 & 0.654 & 0.665 & 0.672 & 0.677 \\
\midrule
\multicolumn{9}{c}{Standard deviation of AUC scores across seeds}\\
\midrule
annthyroid & 0.000 & 0.007 & 0.004 & 0.003 & 0.002 & 0.001 & 0.001 & 0.001 \\
magic.gamma & 0.000 & 0.007 & 0.005 & 0.004 & 0.003 & 0.002 & 0.001 & 0.001 \\
mammography & 0.000 & 0.003 & 0.003 & 0.003 & 0.002 & 0.002 & 0.001 & 0.001 \\
mnist & 0.000 & 0.013 & 0.008 & 0.005 & 0.004 & 0.003 & 0.002 & 0.001 \\
satellite & 0.000 & 0.009 & 0.006 & 0.004 & 0.003 & 0.003 & 0.002 & 0.001 \\
\midrule
\multicolumn{9}{c}{Execution times [s]}\\
\midrule
annthyroid & 0.461 & 0.021 & 0.022 & 0.025 & 0.030 & 0.043 & 0.072 & 0.149 \\
magic.gamma & 3.380 & 0.034 & 0.038 & 0.046 & 0.063 & 0.097 & 0.172 & 0.349 \\
mammography & 0.993 & 0.025 & 0.027 & 0.031 & 0.039 & 0.057 & 0.098 & 0.196 \\
mnist & 1.084 & 0.024 & 0.030 & 0.042 & 0.071 & 0.129 & 0.230 & 0.486 \\
satellite & 0.460 & 0.020 & 0.023 & 0.029 & 0.040 & 0.071 & 0.131 & 0.269 \\
\bottomrule
\end{tabular}
    \caption{The effect of the number of samples per tree}
    \label{tab:smpl}
\end{table}

Finally, we investigate the impact of the feature bagging ratio. As shown in the upper part of Table \ref{tab:feat}, this hyperparameter has a substantial effect on the overall performance. For datasets where every feature is informative and plays a crucial role in detecting anomalies (such as \textit{annthyroid}, \textit{magic.gamma} or \textit{satellite}), it is optimal to set this parameter to a high value. Conversely, some datasets contain numerous uninformative features that act as noise and degrade the algorithm's detection capabilities. Feature bagging enables the generation of trees that incorporate fewer of these noisy dimensions. This is particularly evident with the \textit{mammography} dataset, which achieved its highest score at a setting of $0.6$, and the \textit{mnist} dataset, which is better handled with a bagging ratio of $0.2$. Furthermore, the table demonstrates that among all the hyper-parameters, the feature bagging ratio has the most pronounced effect on increasing the variance of the results.

\begin{table}
\centering
\begin{tabular}{l|c|ccccccc}
\toprule
 & exact & \multicolumn{7}{c}{Feature bagging ratio =} \\
Dataset & MST & 0.1 & 0.2 & 0.3 & 0.4 & 0.6 & 0.8 & 1.0 \\
\midrule
 \multicolumn{9}{c}{Mean AUC scores across seeds}\\
\midrule
annthyroid & 0.887 & 0.840 & 0.840 & 0.840 & 0.858 & 0.877 & 0.887 & 0.888 \\
magic.gamma & 0.693 & 0.687 & 0.689 & 0.690 & 0.692 & 0.693 & 0.694 & 0.695 \\
mammography & 0.836 & 0.848 & 0.848 & 0.848 & 0.860 & 0.867 & 0.840 & 0.859 \\
mnist & 0.845 & 0.744 & 0.843 & 0.840 & 0.840 & 0.840 & 0.840 & 0.840 \\
satellite & 0.692 & 0.675 & 0.675 & 0.675 & 0.676 & 0.676 & 0.677 & 0.677 \\
\midrule
\multicolumn{9}{c}{Standard deviation of AUC scores across seeds}\\
\midrule
annthyroid & 0.000 & 0.025 & 0.025 & 0.025 & 0.022 & 0.016 & 0.010 & 0.001 \\
magic.gamma & 0.000 & 0.011 & 0.008 & 0.006 & 0.005 & 0.004 & 0.002 & 0.001 \\
mammography & 0.000 & 0.009 & 0.009 & 0.009 & 0.008 & 0.022 & 0.007 & 0.001 \\
mnist & 0.000 & 0.051 & 0.011 & 0.009 & 0.006 & 0.005 & 0.003 & 0.001 \\
satellite & 0.000 & 0.005 & 0.003 & 0.003 & 0.002 & 0.002 & 0.002 & 0.001 \\
\midrule
\multicolumn{9}{c}{Execution times [s]}\\
\midrule
annthyroid & 0.461 & 0.128 & 0.127 & 0.125 & 0.129 & 0.131 & 0.139 & 0.150 \\
magic.gamma & 3.380 & 0.264 & 0.237 & 0.246 & 0.266 & 0.307 & 0.292 & 0.349 \\
mammography & 0.993 & 0.159 & 0.159 & 0.158 & 0.162 & 0.168 & 0.181 & 0.195 \\
mnist & 1.084 & 0.184 & 0.219 & 0.286 & 0.321 & 0.419 & 0.475 & 0.482 \\
satellite & 0.460 & 0.126 & 0.158 & 0.164 & 0.182 & 0.210 & 0.231 & 0.271 \\
\bottomrule
\end{tabular}
    \caption{The effect of feature bagging ratio}
    \label{tab:feat}
\end{table}

\subsection{Evaluation on the ADBench datasets}

In this section, we evaluate the performance of our proposed method against several widely used anomaly detection algorithms. For this comparative analysis, we utilize the comprehensive collection of labeled benchmark datasets introduced in the ADBench study \cite{han2022adbench}. The anomaly detection methods involved in the comparison are PCA \cite{shyu2003novel}, OCSVM \cite{scholkopf1999support}, KNN \cite{ramaswamy2000efficient}, LOF \cite{breunig2000lof}, CBLOF \cite{he2003discovering}, HBOS \cite{goldstein2012histogram}, COPOD \cite{li2020copod}, ECOD \cite{li2022ecod}, and IF \cite{liu2008iforest}. Each method was executed 10 times using different random seeds and the final
results were averaged. For the RIFT ensemble we used $M=100$ trees,
$\psi=256$ samples per tree and no feature bagging (ratio $1.0$), which are the
default ensemble size and sub-sample size of the IF, so that the
two ensembles are directly comparable. These values were fixed in advance and
not tuned per dataset. The baseline methods were run with the default
hyper-parameters of PyOD \cite{zhao2019pyod}. All methods received the same
preprocessing (median/MAD scaling, described in Section \ref{sec:numerical}).

Certain benchmark datasets are exceptionally large in terms of either sample size or dimensionality, preventing specific algorithms from completing execution within a reasonable timeframe. Consequently, for datasets exceeding 20,000 instances or 500 features, PCA, OCSVM, KNN, LOF, and the exact version of our proposed RIFT algorithm were omitted from the evaluation.

The detailed numerical results of the AUC comparison are summarized in Table \ref{tab:auc_cmp}.

\begin{sidewaystable}
\footnotesize
\begin{tabular}{l|llllllllll|lll}
\toprule
 & PCA & OCSVM & KNN & LOF & CBLOF & HBOS & COPOD & ECOD & IF & LODA & \multicolumn{3}{c}{RIFT} \\
Dataset &  &  &  &  &  &  &  &  &  &  &  Ensemble & Exact & KNN \\
\midrule
ALOI & - & - & - & - & 0.557 & 0.529 & 0.515 & 0.531 & 0.539 & 0.497 & 0.558 & - & 0.558 \\
annthyroid & 0.677 & 0.854 & 0.909 & 0.523 & 0.903 & 0.620 & 0.776 & 0.789 & 0.819 & 0.533 & 0.888 & 0.887 & 0.887 \\
backdoor & - & - & - & - & 0.789 & 0.739 & 0.789 & 0.846 & 0.740 & 0.682 & 0.848 & - & 0.833 \\
breastw & 0.946 & 0.917 & 0.977 & 0.385 & 0.965 & 0.985 & 0.994 & 0.991 & 0.987 & 0.988 & 0.971 & 0.962 & 0.962 \\
campaign & - & - & - & - & 0.771 & 0.773 & 0.783 & 0.770 & 0.713 & 0.564 & 0.796 & - & 0.801 \\
cardio & 0.951 & 0.887 & 0.692 & 0.562 & 0.814 & 0.841 & 0.922 & 0.935 & 0.931 & 0.847 & 0.892 & 0.895 & 0.895 \\
Cardiotocography & 0.752 & 0.756 & 0.512 & 0.552 & 0.572 & 0.599 & 0.663 & 0.785 & 0.682 & 0.708 & 0.758 & 0.766 & 0.766 \\
celeba & - & - & - & - & 0.645 & 0.755 & 0.751 & 0.757 & 0.694 & 0.548 & 0.725 & - & 0.671 \\
census & - & - & - & - & 0.695 & 0.620 & 0.674 & 0.660 & 0.608 & 0.485 & 0.648 & - & - \\
cover & - & - & - & - & 0.897 & 0.762 & 0.884 & 0.920 & 0.880 & 0.876 & 0.862 & - & 0.835 \\
donors & - & - & - & - & 0.682 & 0.736 & 0.815 & 0.888 & 0.776 & 0.533 & 0.724 & - & 0.779 \\
fault & 0.484 & 0.452 & 0.649 & 0.591 & 0.562 & 0.599 & 0.455 & 0.469 & 0.575 & 0.440 & 0.434 & 0.433 & 0.433 \\
fraud & - & - & - & - & 0.953 & 0.954 & 0.947 & 0.950 & 0.950 & 0.670 & 0.955 & - & 0.955 \\
glass & 0.709 & 0.866 & 0.887 & 0.809 & 0.868 & 0.780 & 0.755 & 0.705 & 0.786 & 0.575 & 0.831 & 0.831 & 0.831 \\
Hepatitis & 0.755 & 0.683 & 0.722 & 0.750 & 0.657 & 0.769 & 0.804 & 0.739 & 0.709 & 0.577 & 0.735 & 0.735 & 0.735 \\
http & - & - & - & - & 0.999 & 0.994 & 0.992 & 0.979 & 1.000 & 0.587 & 0.999 & - & 0.054 \\
InternetAds & - & - & - & - & 0.703 & 0.684 & 0.676 & 0.677 & 0.688 & 0.566 & 0.690 & - & 0.692 \\
Ionosphere & 0.788 & 0.911 & 0.929 & 0.893 & 0.892 & 0.542 & 0.789 & 0.728 & 0.848 & 0.817 & 0.847 & 0.846 & 0.846 \\
landsat & 0.365 & 0.359 & 0.578 & 0.550 & 0.576 & 0.555 & 0.421 & 0.368 & 0.484 & 0.385 & 0.443 & 0.461 & 0.461 \\
letter & 0.526 & 0.506 & 0.901 & 0.896 & 0.769 & 0.596 & 0.560 & 0.572 & 0.636 & 0.540 & 0.579 & 0.582 & 0.582 \\
Lymphography & 0.996 & 0.980 & 0.980 & 0.979 & 0.976 & 0.993 & 0.996 & 0.995 & 0.999 & 0.786 & 0.982 & 0.982 & 0.982 \\
magic.gamma & 0.673 & 0.682 & 0.792 & 0.696 & 0.700 & 0.716 & 0.681 & 0.638 & 0.723 & 0.652 & 0.695 & 0.693 & 0.693 \\
mammography & 0.888 & 0.888 & 0.873 & 0.722 & 0.858 & 0.801 & 0.905 & 0.906 & 0.858 & 0.873 & 0.859 & 0.836 & 0.836 \\
mnist & 0.851 & 0.815 & 0.796 & 0.646 & 0.793 & 0.618 & 0.774 & 0.746 & 0.797 & 0.730 & 0.840 & 0.845 & 0.845 \\
musk & 1.000 & 1.000 & 0.756 & 0.560 & 1.000 & 1.000 & 0.946 & 0.956 & 1.000 & 0.988 & 1.000 & 1.000 & 1.000 \\
optdigits & 0.521 & 0.563 & 0.419 & 0.586 & 0.804 & 0.846 & 0.682 & 0.605 & 0.722 & 0.499 & 0.600 & 0.600 & 0.600 \\
PageBlocks & 0.904 & 0.910 & 0.911 & 0.722 & 0.915 & 0.768 & 0.875 & 0.914 & 0.900 & 0.722 & 0.913 & 0.911 & 0.911 \\
pendigits & 0.934 & 0.943 & 0.761 & 0.516 & 0.911 & 0.924 & 0.905 & 0.927 & 0.942 & 0.936 & 0.954 & 0.958 & 0.958 \\
Pima & 0.637 & 0.620 & 0.702 & 0.629 & 0.646 & 0.706 & 0.654 & 0.594 & 0.673 & 0.637 & 0.674 & 0.674 & 0.674 \\
satellite & 0.602 & 0.606 & 0.655 & 0.545 & 0.728 & 0.756 & 0.633 & 0.583 & 0.705 & 0.620 & 0.677 & 0.692 & 0.692 \\
satimage-2 & 0.977 & 0.985 & 0.924 & 0.546 & 0.999 & 0.980 & 0.974 & 0.965 & 0.994 & 0.985 & 0.994 & 0.994 & 0.994 \\
shuttle & - & - & - & - & 0.764 & 0.984 & 0.995 & 0.993 & 0.997 & 0.567 & 0.994 & - & 0.994 \\
skin & - & - & - & - & 0.594 & 0.587 & 0.471 & 0.489 & 0.674 & 0.448 & 0.620 & - & 0.585 \\
smtp & - & - & - & - & 0.867 & 0.806 & 0.912 & 0.880 & 0.904 & 0.822 & 0.900 & - & 0.886 \\
SpamBase & 0.550 & 0.563 & 0.545 & 0.433 & 0.597 & 0.652 & 0.688 & 0.656 & 0.629 & 0.449 & 0.599 & 0.607 & 0.607 \\
speech & 0.469 & 0.466 & 0.484 & 0.508 & 0.468 & 0.476 & 0.491 & 0.470 & 0.473 & 0.462 & 0.470 & 0.469 & 0.469 \\
Stamps & 0.906 & 0.864 & 0.851 & 0.466 & 0.761 & 0.893 & 0.930 & 0.876 & 0.890 & 0.899 & 0.888 & 0.888 & 0.888 \\
thyroid & 0.958 & 0.989 & 0.982 & 0.561 & 0.984 & 0.941 & 0.939 & 0.977 & 0.979 & 0.869 & 0.989 & 0.989 & 0.989 \\
vertebral & 0.439 & 0.433 & 0.381 & 0.457 & 0.394 & 0.303 & 0.335 & 0.420 & 0.355 & 0.298 & 0.374 & 0.374 & 0.374 \\
vowels & 0.597 & 0.766 & 0.969 & 0.935 & 0.893 & 0.675 & 0.496 & 0.593 & 0.755 & 0.699 & 0.753 & 0.754 & 0.754 \\
Waveform & 0.647 & 0.595 & 0.740 & 0.711 & 0.728 & 0.690 & 0.734 & 0.603 & 0.720 & 0.615 & 0.724 & 0.725 & 0.725 \\
WBC & 0.993 & 0.995 & 0.992 & 0.805 & 0.980 & 0.989 & 0.995 & 0.995 & 0.995 & 0.990 & 0.995 & 0.995 & 0.995 \\
WDBC & 0.986 & 0.973 & 0.969 & 0.983 & 0.913 & 0.992 & 0.994 & 0.971 & 0.988 & 0.975 & 0.981 & 0.982 & 0.982 \\
Wilt & 0.378 & 0.478 & 0.689 & 0.665 & 0.553 & 0.350 & 0.345 & 0.394 & 0.462 & 0.354 & 0.485 & 0.485 & 0.485 \\
wine & 0.787 & 0.769 & 0.587 & 0.889 & 0.267 & 0.907 & 0.867 & 0.733 & 0.802 & 0.871 & 0.880 & 0.880 & 0.880 \\
WPBC & 0.481 & 0.470 & 0.501 & 0.504 & 0.471 & 0.554 & 0.523 & 0.481 & 0.498 & 0.500 & 0.477 & 0.477 & 0.477 \\
yeast & 0.397 & 0.408 & 0.401 & 0.464 & 0.450 & 0.402 & 0.381 & 0.444 & 0.397 & 0.461 & 0.379 & 0.381 & 0.381 \\
\bottomrule
\end{tabular}
\caption{ROCAUC scores on the ADBench classic benchmark datasets}
\label{tab:auc_cmp}
\end{sidewaystable}

As demonstrated by the ADBench study, it is impossible to declare a universal winner, no single anomaly detection method consistently outperforms the rest. The effectiveness of these algorithms is highly data-dependent, with certain methods excelling on specific datasets while struggling on others. This inherent variability underscores the need for developing algorithms based on novel theoretical paradigms, such as our proposed approach. The evaluation results across all datasets are summarized in Table \ref{tab:auc_all}, while Table \ref{tab:auc_small} presents the performance metrics specifically for the smaller datasets.

\begin{table}
    \centering
\begin{tabular}{lcc}
\toprule
 & Mean AUC & Relative std of AUC \\
\midrule
IF & 0.7561 & 0.0220 \\
RIFT exact & 0.7526 & 0.0000 \\
RIFT KNN & 0.7526 & 0.0000 \\
RIFT ensemble & 0.7518 & 0.0010 \\
KNN & 0.7476 & 0.0000 \\
CBLOF & 0.7461 & 0.0384 \\
OCSVM & 0.7339 & 0.0000 \\
COPOD & 0.7319 & 0.0000 \\
HBOS & 0.7299 & 0.0000 \\
PCA & 0.7213 & 0.0000 \\
ECOD & 0.7213 & 0.0000 \\
LODA & 0.6847 & 0.0890 \\
LOF & 0.6481 & 0.0000 \\
\bottomrule
\end{tabular}
    \caption{Results for datasets with at most 20,000 samples and 500 features}
    \label{tab:auc_small}
\end{table}

On the smaller datasets, the IF (introduced in 2008) and all RIFT variants perform at essentially the same level, with the Isolation Forest holding a marginal lead (mean AUC of 0.7561 versus 0.7526 and 0.7518 for RIFT exact and RIFT ensemble). On the full collection, which also includes the large datasets where only the scalable methods could be run, the RIFT ensemble achieves the highest mean AUC, narrowly ahead of the Isolation Forest (0.7634 versus 0.7632, they are statistically indistinguishable). RIFT KNN performs extremely close to the exact variant only, it is much faster.
Alongside the average AUC scores, the tables also report the relative standard deviation, calculated by dividing the standard deviation of the results obtained across different random seeds by the mean. For deterministic methods, this value is naturally 0, so both RIFT Exact and RIFT KNN have 0 standard deviation. Notably, the relative standard deviation of the RIFT ensemble is an order of magnitude lower than that of the IF on both dataset collections (0.0010 versus 0.0220, and 0.0014 versus 0.0212), even though the ensemble size was identical in both cases.


\begin{table}
    \centering
\begin{tabular}{lcc}
\toprule
 & Mean AUC & Relative std of AUC \\
\midrule
RIFT ensemble & 0.7634 & 0.0014 \\
IF & 0.7632 & 0.0212 \\
CBLOF & 0.7507 & 0.0426 \\
COPOD & 0.7466 & 0.0000 \\
ECOD & 0.7418 & 0.0000 \\
HBOS & 0.7391 & 0.0000 \\
LODA & 0.6623 & 0.1196 \\
\bottomrule
\end{tabular}
    \caption{Results for all classic datasets of ADBench}
    \label{tab:auc_all}
\end{table}

\begin{sidewaystable}
\footnotesize
\begin{tabular}{l|llllllllll|lll}
\toprule
 & PCA & OCSVM & KNN & LOF & CBLOF & HBOS & COPOD & ECOD & IF & LODA & \multicolumn{3}{c}{RIFT} \\
Dataset &  &  &  &  &  &  &  &  &  &  &  Ensemble & Exact & KNN \\
\midrule
ALOI & - & - & - & - & 0.144 & 0.099 & 0.322 & 0.919 & 0.218 & 0.086 & 1.394 & - & 3.796 \\
annthyroid & 0.001 & 0.877 & 0.038 & 0.063 & 0.007 & 0.001 & 1.661 & 0.048 & 0.116 & 0.016 & 0.155 & 0.412 & 0.057 \\
backdoor & - & - & - & - & 0.530 & 0.187 & 4.068 & 2.027 & 0.441 & 0.664 & 7.669 & - & 53.208 \\
breastw & 0.001 & 0.008 & 0.012 & 0.013 & 0.002 & 0.001 & 0.788 & 0.011 & 0.080 & 0.005 & 0.069 & 0.003 & 0.006 \\
campaign & - & - & - & - & 0.071 & 0.040 & 0.259 & 1.104 & 0.314 & 0.142 & 1.775 & - & 4.627 \\
cardio & 0.001 & 0.079 & 0.010 & 0.011 & 0.003 & 0.002 & 0.014 & 0.013 & 0.085 & 0.007 & 0.108 & 0.030 & 0.012 \\
Cardiotocography & 0.002 & 0.101 & 0.012 & 0.013 & 0.004 & 0.002 & 0.014 & 0.014 & 0.087 & 0.008 & 0.112 & 0.039 & 0.016 \\
celeba & - & - & - & - & 0.277 & 0.138 & 0.496 & 0.487 & 1.320 & 0.630 & 6.494 & - & 44.168 \\
census & - & - & - & - & 5.477 & 1.596 & 16.994 & 16.845 & 1.366 & 4.738 & 50.150 & - & - \\
cover & - & - & - & - & 0.149 & 0.044 & 0.741 & 0.231 & 0.914 & 0.469 & 4.333 & - & 89.256 \\
donors & - & - & - & - & 0.227 & 0.127 & 0.643 & 0.645 & 3.229 & 1.348 & 9.205 & - & 2.412 \\
fault & 0.002 & 0.106 & 0.015 & 0.015 & 0.003 & 0.002 & 0.016 & 0.015 & 0.091 & 0.008 & 0.128 & 0.037 & 0.015 \\
fraud & - & - & - & - & 0.326 & 0.130 & 1.547 & 0.731 & 1.293 & 0.613 & 6.739 & - & 386.518 \\
glass & 0.001 & 0.001 & 0.012 & 0.012 & 0.001 & 0.001 & 1.288 & 0.011 & 0.074 & 0.004 & 0.046 & 0.001 & 0.005 \\
Hepatitis & 0.001 & 0.001 & 0.002 & 0.001 & 0.001 & 0.001 & 0.015 & 0.012 & 0.087 & 0.005 & 0.019 & 0.000 & 0.005 \\
http & - & - & - & - & 0.166 & 0.031 & 1.678 & 0.215 & 2.237 & 0.815 & 5.453 & - & 49.254 \\
InternetAds & - & - & - & - & 0.102 & 0.085 & 2.334 & 0.330 & 0.087 & 0.079 & 1.990 & - & 0.824 \\
Ionosphere & 0.001 & 0.005 & 0.002 & 0.002 & 0.002 & 0.002 & 0.012 & 0.012 & 0.080 & 0.005 & 0.087 & 0.001 & 0.005 \\
landsat & 0.006 & 1.352 & 0.106 & 0.104 & 0.011 & 0.006 & 0.023 & 0.023 & 0.112 & 0.020 & 0.282 & 0.461 & 0.085 \\
letter & 0.002 & 0.079 & 0.012 & 0.013 & 0.004 & 0.003 & 0.014 & 0.014 & 0.088 & 0.008 & 0.117 & 0.025 & 0.013 \\
Lymphography & 0.001 & 0.001 & 0.001 & 0.001 & 0.001 & 0.001 & 0.012 & 0.011 & 0.076 & 0.004 & 0.032 & 0.000 & 0.005 \\
magic.gamma & 0.003 & 7.633 & 0.171 & 0.268 & 0.012 & 0.004 & 0.532 & 0.022 & 0.186 & 0.040 & 0.376 & 3.322 & 0.364 \\
mammography & 0.002 & 2.183 & 0.091 & 0.102 & 0.008 & 0.002 & 1.588 & 0.053 & 0.134 & 0.023 & 0.203 & 0.981 & 0.067 \\
mnist & 0.045 & 4.835 & 0.414 & 0.393 & 0.046 & 0.014 & 2.127 & 0.065 & 0.130 & 0.037 & 0.518 & 1.076 & 0.313 \\
musk & 0.052 & 1.179 & 0.134 & 0.128 & 0.015 & 0.016 & 0.037 & 0.037 & 0.098 & 0.022 & 0.412 & 0.245 & 0.122 \\
optdigits & 0.013 & 1.441 & 0.151 & 0.144 & 0.015 & 0.009 & 0.031 & 0.031 & 0.113 & 0.023 & 0.291 & 0.380 & 0.111 \\
PageBlocks & 0.001 & 0.521 & 0.025 & 0.035 & 0.003 & 0.002 & 0.521 & 0.015 & 0.104 & 0.011 & 0.147 & 0.255 & 0.038 \\
pendigits & 0.002 & 1.012 & 0.073 & 0.054 & 0.006 & 0.003 & 0.019 & 0.017 & 0.125 & 0.021 & 0.191 & 0.452 & 0.063 \\
Pima & 0.001 & 0.010 & 0.013 & 0.013 & 0.002 & 0.001 & 1.858 & 0.012 & 0.081 & 0.005 & 0.069 & 0.004 & 0.007 \\
satellite & 0.006 & 1.380 & 0.120 & 0.105 & 0.012 & 0.006 & 0.026 & 0.024 & 0.111 & 0.021 & 0.282 & 0.460 & 0.086 \\
satimage-2 & 0.006 & 1.117 & 0.091 & 0.087 & 0.010 & 0.005 & 0.881 & 0.183 & 0.110 & 0.019 & 0.260 & 0.373 & 0.073 \\
shuttle & - & - & - & - & 0.019 & 0.008 & 0.807 & 0.046 & 0.321 & 0.072 & 0.796 & - & 2.356 \\
skin & - & - & - & - & 0.065 & 0.013 & 1.548 & 0.066 & 0.675 & 0.316 & 2.377 & - & 2.212 \\
smtp & - & - & - & - & 0.030 & 0.006 & 0.030 & 0.030 & 0.508 & 0.149 & 0.970 & - & 4.601 \\
SpamBase & 0.009 & 0.852 & 0.083 & 0.081 & 0.008 & 0.005 & 2.014 & 0.034 & 0.093 & 0.015 & 0.252 & 0.243 & 0.071 \\
speech & 0.880 & 4.656 & 0.634 & 0.610 & 0.086 & 0.042 & 0.107 & 0.106 & 0.102 & 0.047 & 0.825 & 0.676 & 0.248 \\
Stamps & 0.001 & 0.002 & 0.012 & 0.012 & 0.001 & 0.001 & 0.777 & 0.011 & 0.081 & 0.004 & 0.067 & 0.001 & 0.005 \\
thyroid & 0.001 & 0.227 & 0.023 & 0.027 & 0.003 & 0.001 & 1.560 & 0.049 & 0.098 & 0.010 & 0.108 & 0.115 & 0.023 \\
vertebral & 0.001 & 0.002 & 0.012 & 0.012 & 0.002 & 0.001 & 0.011 & 0.011 & 0.075 & 0.004 & 0.053 & 0.001 & 0.005 \\
vowels & 0.001 & 0.039 & 0.013 & 0.013 & 0.002 & 0.001 & 2.065 & 0.015 & 0.086 & 0.008 & 0.086 & 0.015 & 0.010 \\
Waveform & 0.002 & 0.272 & 0.025 & 0.029 & 0.006 & 0.002 & 0.015 & 0.015 & 0.098 & 0.013 & 0.140 & 0.112 & 0.031 \\
WBC & 0.001 & 0.001 & 0.012 & 0.012 & 0.001 & 0.001 & 0.770 & 0.011 & 0.076 & 0.004 & 0.047 & 0.001 & 0.005 \\
WDBC & 0.001 & 0.005 & 0.002 & 0.002 & 0.002 & 0.002 & 0.014 & 0.012 & 0.083 & 0.005 & 0.085 & 0.002 & 0.006 \\
Wilt & 0.001 & 0.352 & 0.016 & 0.025 & 0.004 & 0.001 & 1.537 & 0.058 & 0.103 & 0.011 & 0.113 & 0.182 & 0.030 \\
wine & 0.001 & 0.001 & 0.013 & 0.012 & 0.001 & 0.001 & 2.038 & 0.015 & 0.078 & 0.004 & 0.026 & 0.000 & 0.005 \\
WPBC & 0.001 & 0.002 & 0.001 & 0.001 & 0.001 & 0.002 & 0.012 & 0.011 & 0.078 & 0.004 & 0.052 & 0.001 & 0.005 \\
yeast & 0.001 & 0.035 & 0.013 & 0.013 & 0.003 & 0.001 & 1.032 & 0.012 & 0.083 & 0.007 & 0.077 & 0.015 & 0.010 \\
\bottomrule
\end{tabular}
\caption{Runtime (seconds) on the ADBench classic benchmark datasets}
\label{tab:runtime_all}
\end{sidewaystable}

Table \ref{tab:runtime_all} reports the execution times of all methods. As with the IF, the runtime of the RIFT ensemble depends strongly on the ensemble size and on the number of samples used per tree. We deliberately set both parameters to the defaults of the Isolation Forest, so that the two ensembles are directly comparable. Even so, the RIFT ensemble is slower than the IF, because building a single RIFT tree requires more computation than building an isolation tree. In exchange, each RIFT tree depends on fewer random choices, which explains why RIFT has a lower variance but a higher runtime. The slowdown is most pronounced on high-dimensional data, where computing Euclidean distances dominates the cost: on InternetAds, census and backdoor, the RIFT ensemble is 10--30 times slower than the IF, whereas on low-dimensional datasets such as http, smtp and skin the gap is only a factor of 2--4. The execution time can, however, be reduced substantially. As shown in Table \ref{tab:ens}, the number of trees can be decreased drastically with only a minimal loss in AUC, while the standard deviation stays low, reducing the ensemble size from 256 to 4 trees shortens the execution time by nearly two orders of magnitude. Sub-sampling (Table \ref{tab:smpl}) and feature bagging (Table \ref{tab:feat}) provide further possibilities for reducing the computational cost.

\section{Conclusions}\label{sec:conclusions}

In this paper we introduced an unsupervised, deep-learning-free anomaly detection method derived from the geometric interpretation of the recently established closed-form solution of the one-dimensional IF. By representing the data through its Euclidean Minimum Spanning Tree and scoring each point as the sum of the apparent sizes of the tree edges observed from it, this method generalizes the 1D result to arbitrary dimensions: in one dimension it recovers the exact expected output of an IF with infinitely many trees, while in higher dimensions it replaces the random recursive partitioning of IF with a deterministic, distance-based construction. The resulting score is inherently density-invariant and free of the axis-parallel artifacts characteristic of IF.

We presented three variants of the method: an exact formulation built on a
global MST, a scalable approximation based on the $k$-nearest-neighbor graph,
and a sub-sampled ensemble with feature bagging. The exact and approximate
variants are fully deterministic, while the ensemble trades determinism for
scalability and robustness, yet confines its randomness to the instance- and
feature-sampling stage rather than to every split, as in IF.

On synthetic data and the ADBench benchmark suite, RIFT achieves accuracy
competitive with IF. Consistent with the central observation of
ADBench -- that no single method dominates across all datasets -- RIFT is not a
universal winner, but it matches the strongest classical detectors while
offering two distinct advantages: a fully deterministic variant, and an
ensemble whose score variance across random seeds is an order of magnitude
lower than that of IF at the same ensemble size.

\section*{Acknowledgement}

The project supported by the Doctoral Excellence Fellowship Programme (DCEP) is funded by the National Research Development and Innovation Fund of the Ministry of Culture and Innovation and the Budapest University of Technology and Economics, under a grant agreement with the National Research, Development and Innovation Office.

\bibliography{bib}

\begin{appendices}

\section{The derivation of the explicit solution of the 1D Isolation Forest}\label{sec:proof1d}

In this section, we derive the expected path length of a training sample $x_i$ in an isolation tree. We assume that 1) the sample values are pairwise distinct and ordered as $x_1<x_2<\dots<x_N$, 2) each tree is created on all samples ($\psi=N$) without a height limit so that the recursion stops only when a node contains a single point, 3) at every node the split value is drawn uniformly from $[\min_{x\in S}x,\max_{x\in S}x]$ as in the original algorithm. We introduce $H(i) = E[\mathcal{H}(x_i\vert\mathcal{X})]$ as the limit of the forest-averaged path length as the number of trees tends to infinity. The same result is obtained in \cite{zheng2025iforestbias}, our aim here is to show that it can also be derived in a few elementary algebraic steps.

We first express the expected path length of $x_i$ given a continuous subset of ordered data consisting of only elements $x_\ell,\dots,x_u$ with $\ell\le i\le u$, that is, $H_{\ell,u}(i) = E[\mathcal{H}(x_i\vert x_\ell,x_{\ell+1},\dots,x_u)]$. From this quantity, we immediately get $H(i)$, since $H(i)=H_{1,N}(i)$. By conditioning on the interval of the first random split, the expected path length of $x_i$ can be expressed recursively as
\begin{align}\label{eq:hlui1}
    H_{\ell,u}(i) = 1+\sum_{k=\ell}^{i-1}\frac{x_{k+1}-x_k}{x_u-x_\ell}H_{k+1,u}(i) + \sum_{k=i}^{u-1}\frac{x_{k+1}-x_k}{x_u-x_\ell}H_{\ell,k}(i),
\end{align}
for $\ell<u$, with the boundary condition $H_{i,i}(i) = 0$. Here, the first term $1$ is due to the split itself (increasing the path length by $1$) and $(x_{k+1}-x_k)/(x_u-x_\ell)$ is the probability that the first split falls into the $(x_{k+1},x_k)$ interval. If this interval is before $x_i$, the problem can be reduced to finding the place of $x_i$ in the tree consisting of elements $x_{k+1}, \dots, x_u$ (represented by the second term of \eqref{eq:hlui1}), if it is past $x_i$, the problem can be reduced to the first part of the sequence $x_{\ell}, \dots, x_k$ (third term of \eqref{eq:hlui1}, see Figure \ref{fig:probsplit}).

\begin{figure}
    \centering
    \includegraphics[width=\textwidth]{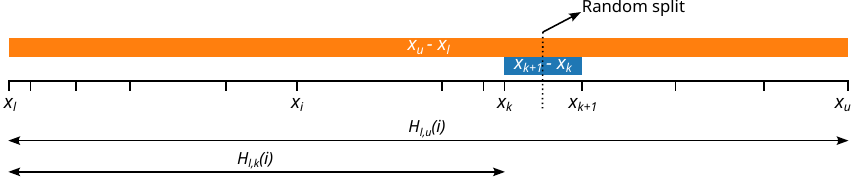}
    \caption{Splitting between $x_k$ and $x_{k+1}$}
    \label{fig:probsplit}
\end{figure}

The solution of the recursion is formulated by the following theorem.
\begin{theorem}\label{thm:hi}
The expected path length of $x_i$, $H(i)$, is given by the following explicit formula:
\begin{align}\label{eq:hi}
    H(i) = \sum_{k=1}^{i-1}\frac{x_{k+1}-x_k}{x_i-x_k} + \sum_{k=i}^{N-1}\frac{x_{k+1}-x_k}{x_{k+1}-x_i}.
\end{align}
\end{theorem}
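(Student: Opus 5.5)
We prove the more general identity
\begin{align*}
    H_{\ell,u}(i) = \sum_{k=\ell}^{i-1}\frac{x_{k+1}-x_k}{x_i-x_k} + \sum_{k=i}^{u-1}\frac{x_{k+1}-x_k}{x_{k+1}-x_i}
\end{align*}
for all $\ell\le i\le u$, by induction on $u-\ell$; Theorem~\ref{thm:hi} is then the case $\ell=1$, $u=N$. The base case $\ell=u=i$ gives two empty sums, matching $H_{i,i}(i)=0$. For the inductive step we assume the formula for all strictly shorter windows containing $i$, substitute it into the right-hand side of \eqref{eq:hlui1}, and check that the result equals the claimed expression. Write $\delta_k=x_{k+1}-x_k$ and $L=x_u-x_\ell$.

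The claimed formula is a sum of \emph{edge contributions}: edge $k<i$ contributes $a_k=\delta_k/(x_i-x_k)$, and edge $k\ge i$ contributes $b_k=\delta_k/(x_{k+1}-x_i)$. Neither depends on $\ell,u$ beyond whether the edge lies in the window. So $L$ times the right-hand side of \eqref{eq:hlui1} is $L$ plus, for each edge $m$ of the window, $a_m$ or $b_m$ multiplied by the total length of those split intervals $k$ whose resulting subwindow still contains edge $m$.

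Take a left edge $m<i$. It survives a split at interval $k$ iff either $k<m$ (the left part is cut away but $m$ remains), or $k\ge i$ (the right part is cut away). The total weight is $(x_m-x_\ell)+(x_u-x_i)=L-(x_i-x_m)$. Symmetrically, a right edge $m\ge i$ has total weight $L-(x_{m+1}-x_i)$. Hence
\begin{align*}
L\cdot\mathrm{RHS} = L + \sum_{m<i}a_m\bigl(L-(x_i-x_m)\bigr) + \sum_{m\ge i} b_m\bigl(L-(x_{m+1}-x_i)\bigr).
\end{align*}
Now $a_m(x_i-x_m)=\delta_m$ and $b_m(x_{m+1}-x_i)=\delta_m$, so the subtracted terms total $\sum_{m=\ell}^{u-1}\delta_m=L$, which cancels the leading $L$. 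What remains is $L\sum_m(\text{contribution}_m)$. Dividing by $L$ gives exactly the claimed formula for $H_{\ell,u}(i)$, completing the induction.

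The main obstacle is the interchange of summation, specifically identifying the correct surviving set of splits for each edge. A pitfall is that the split interval $k$ containing edge $m$ itself removes that edge. For example, splitting at $k=m<i$ leaves the window $x_{m+1},\dots,x_u$, which no longer contains edge $m$. The bookkeeping must therefore use strict inequalities: $k<m$ for left edges and $k>m$ for right edges. It also needs the boundary edges $k=i-1$ and $k=i$ handled so that the window $[x_\ell,x_u]$ splits into exactly the three pieces used above: the part below $x_m$, the gap from $x_m$ to $x_i$, and the part above $x_i$. Once this indicator count is correct, the telescoping cancellation $\sum_m\delta_m=L$ is immediate.
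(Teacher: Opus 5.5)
Your proof is correct and follows essentially the same route as the paper. Both arguments substitute the general formula for $H_{\ell,u}(i)$ into the recursion \eqref{eq:hlui1} and swap the order of summation, so that each edge term is weighted by the total length of the splits that keep it (your weights $L-(x_i-x_m)$ and $L-(x_{m+1}-x_i)$ are exactly the paper's collapsed inner sums); the subtracted parts then total $\sum_m \delta_m = L$ and cancel the leading term, and your explicit induction on $u-\ell$ additionally spells out why checking the recursion suffices (every subwindow in \eqref{eq:hlui1} is strictly shorter), which the paper leaves implicit.
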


\begin{proof}
First we show that for $H_{\ell,u}(i)$, defined by eq. \eqref{eq:hlui1},
\begin{align}\label{eq:hlui}
    H_{\ell,u}(i) = \sum_{k=\ell}^{i-1}\frac{x_{k+1}-x_k}{x_i-x_k} + \sum_{k=i}^{u-1}\frac{x_{k+1}-x_k}{x_{k+1}-x_i}
\end{align}
holds, which, since $H(i)=H_{1,N}(i)$, proves the Theorem. 

To show that \eqref{eq:hlui} satisfies \eqref{eq:hlui1}, we substitute \eqref{eq:hlui} into the right-hand side of \eqref{eq:hlui1}, leading to
\begin{align}
    \begin{split}
    H_{\ell,u}(i) &= 1\\
    &\quad+\sum_{k=\ell}^{i-1}\frac{x_{k+1}-x_k}{x_u-x_\ell} \sum_{j=k+1}^{i-1}\frac{x_{j+1}-x_j}{x_i-x_j}\\
    &\quad+\sum_{k=\ell}^{i-1}\frac{x_{k+1}-x_k}{x_u-x_\ell} \sum_{j=i}^{u-1}\frac{x_{j+1}-x_j}{x_{j+1}-x_i}\\
    &\quad+\sum_{k=i}^{u-1}\frac{x_{k+1}-x_k}{x_u-x_\ell} \sum_{j=\ell}^{i-1}\frac{x_{j+1}-x_j}{x_i-x_j}\\
    &\quad+\sum_{k=i}^{u-1}\frac{x_{k+1}-x_k}{x_u-x_\ell} \sum_{j=i}^{k-1}\frac{x_{j+1}-x_j}{x_{j+1}-x_i}.\\
    \end{split}
\end{align}
Swapping the two sums in all four terms, we get
\begin{align}
    \begin{split}
    H_{\ell,u}(i) &= 1\\
    &\quad+\sum_{j=\ell}^{i-1}\frac{x_{j+1}-x_j}{x_i-x_j} \sum_{k=\ell}^{j-1}\frac{x_{k+1}-x_k}{x_u-x_\ell} \\
    &\quad+\sum_{j=i}^{u-1}\frac{x_{j+1}-x_j}{x_{j+1}-x_i} \sum_{k=\ell}^{i-1}\frac{x_{k+1}-x_k}{x_u-x_\ell}   \\
    &\quad+\sum_{j=\ell}^{i-1}\frac{x_{j+1}-x_j}{x_i-x_j} \sum_{k=i}^{u-1}\frac{x_{k+1}-x_k}{x_u-x_\ell}  \\
    &\quad+\sum_{j=i}^{u-1}\frac{x_{j+1}-x_j}{x_{j+1}-x_i} \sum_{k=j+1}^{u-1}\frac{x_{k+1}-x_k}{x_u-x_\ell}.\\
    \end{split}
\end{align}
In all four terms, the inner sums collapse as $\sum_{k=\ell}^{j-1}\frac{x_{k+1}-x_k}{x_u-x_\ell}=\frac{x_j-x_\ell}{x_u-x_\ell}$, $\sum_{k=\ell}^{i-1}\frac{x_{k+1}-x_k}{x_u-x_\ell}=\frac{x_i-x_\ell}{x_u-x_\ell}$, 
$\sum_{k=i}^{u-1}\frac{x_{k+1}-x_k}{x_u-x_\ell}=\frac{x_u-x_i}{x_u-x_\ell}$ and $\sum_{k=j+1}^{u-1}\frac{x_{k+1}-x_k}{x_u-x_\ell}=\frac{x_u-x_{j+1}}{x_u-x_\ell}$, yielding
\begin{align}
    \begin{split}
    H_{\ell,u}(i) &= 1\\
    &\quad+\sum_{j=\ell}^{i-1}\frac{x_{j+1}-x_j}{x_i-x_j} \left(\frac{x_j-x_\ell}{x_u-x_\ell} + \frac{x_u-x_i}{x_u-x_\ell}\right) \\
    &\quad+\sum_{j=i}^{u-1}\frac{x_{j+1}-x_j}{x_{j+1}-x_i} \left(\frac{x_i-x_\ell}{x_u-x_\ell} + \frac{x_u-x_{j+1}}{x_u-x_\ell} \right),  \\
    \end{split}
\end{align}
which, exploiting that the terms in parentheses are equal to $1-\frac{x_i-x_j}{x_u-x_\ell}$ and $1-\frac{x_{j+1}-x_{i}}{x_u-x_\ell}$, gives
\begin{align}
    H_{\ell,u}(i) &= 1 + \sum_{j=\ell}^{i-1}\frac{x_{j+1}-x_j}{x_i-x_j} + \sum_{j=i}^{u-1}\frac{x_{j+1}-x_j}{x_{j+1}-x_i} \underbrace{- \frac{x_i-x_\ell}{x_u-x_\ell}-\frac{x_u-x_i}{x_u-x_\ell}}_{-1},
\end{align}
resulting \eqref{eq:hlui}.

\end{proof}

\end{appendices}

\end{document}